\documentclass[sigconf]{acmart}

\AtBeginDocument{%
  }

\copyrightyear{2026}
\acmYear{2026}
\setcopyright{cc}
\setcctype{by}
\acmDOI{10.1145/3770855.3817938}
\acmConference[KDD '26]{Proceedings of the 32nd ACM SIGKDD Conference on Knowledge Discovery and Data Mining V.2}{August 09--13, 2026}{Jeju Island, Republic of Korea}
\acmBooktitle{Proceedings of the 32nd ACM SIGKDD Conference on Knowledge Discovery and Data Mining V.2 (KDD '26), August 09--13, 2026, Jeju Island, Republic of Korea}
\acmISBN{979-8-4007-2259-2/2026/08}
\usepackage{etoolbox}
\usepackage{algorithm}
\usepackage{algorithmic}
\usepackage{float}
\usepackage{placeins}
\newtheorem{reliftheorem}{Theorem}[section]

\AtBeginDocument{}
\AtBeginDocument{%
  \pdfstringdefDisableCommands{%
    \def\delta{delta}%
    \def\Delta{Delta}%
    \def\lambda{lambda}%
    \def\eta{eta}%
    \def\mu{mu}%
    \def\rho{rho}%
    \def\mathrm#1{#1}%
    \def\mathcal#1{#1}%
  }%
}
\usepackage{booktabs}
\usepackage{enumitem}
\graphicspath{{pic1/}}
\AtBeginDocument{\setkeys{Gin}{draft=false}}
\AtBeginDocument{\let\balance\relax}

\setlist[itemize]{topsep=2pt, itemsep=1pt, parsep=0pt, partopsep=0pt}

\begin{document}

\title[Reliable Lipschitz Fairness in MTL]{Is Fairness Truly Fair? Towards Reliable Lipschitz Fairness in Multi-Task Learning via Fixed-\texorpdfstring{$\delta$}{delta} Alignment}

\author{Junbo Ding}
\authornote{Both authors contributed equally to this research.}
\email{dingjunbo@buct.edu.cn}
\affiliation{%
  \institution{Beijing University of Chemical Technology}
  \city{Beijing}
  \country{China}
}

\author{Xin Zang}
\authornotemark[1]
\email{zangxin@buct.edu.cn}
\affiliation{%
  \institution{Beijing University of Chemical Technology}
  \city{Beijing}
  \country{China}
}

\author{Chenchen Pan} 
\email{panchenchen@buct.edu.cn} 
\affiliation{%
  \institution{Beijing University of Chemical Technology}
  \city{Beijing}
  \country{China}
}

\author{Donghao Song}
\email{songdonghao@buct.edu.cn} 
\affiliation{%
  \institution{Beijing University of Chemical Technology}
  \city{Beijing}
  \country{China}
}

\author{Jiaxin Zhu}
\email{zhujiaxin@buct.edu.cn} 
\affiliation{%
  \institution{Beijing University of Chemical Technology}
  \city{Beijing}
  \country{China}
}

\author{Danhuai Guo}
\authornote{Corresponding author.}
\email{gdh@buct.edu.cn} 
\affiliation{%
  \institution{Beijing University of Chemical Technology}
  \city{Beijing}
  \country{China}
}

\renewcommand{\shortauthors}{Ding et al.}

\begin{abstract}
Lipschitz-style individual fairness formalizes the idea that semantically similar examples should receive similar predictions, but its evaluation in multi-task learning (MTL) can be confounded by method-induced representation scales. This paper identifies threshold confounding: when the auditing tolerance is derived from each model's own representation distances, different algorithms are compared under different semantic thresholds. A threshold-drift analysis further shows how Bias rankings can change and identifies sufficient conditions for ranking preservation.

We propose \textbf{ReLiF}, a reliability-aware framework that separates evaluation-time fixed-$\delta$ auditing from training-time controlled regularization. ReLiF uses a shared reference tolerance for comparable auditing and a violation-rate feedback controller to keep the Lipschitz surrogate active without letting it dominate stochastic training. This work also develops supporting analysis for threshold drift, reference-tolerance selection, and the relationship between the huberized training surrogate and its unsmoothed positive-margin counterpart.

Experiments on clinical time-series benchmarks and NYUv2 (NYU Depth V2) dense prediction show that fixed-$\delta$ auditing exposes utility--fairness trade-offs that method-dependent thresholds can obscure. On NYUv2 with a ResNet50 backbone, ReLiF achieves competitive utility while substantially reducing aligned bias under shared fixed thresholds. On clinical benchmarks, ReLiF yields controlled fairness-regularized trade-offs, while fixed-$\delta$ auditing reveals that task-balancing baselines can sometimes achieve lower bias and that genuine utility--fairness trade-offs persist. These results support fixed-$\delta$ auditing as a semantically consistent protocol for evaluating Lipschitz fairness in MTL.
\end{abstract}

\begin{CCSXML}
<ccs2012>
   <concept>
       <concept_id>10010147.10010257.10010258.10010262</concept_id>
       <concept_desc>Computing methodologies~Multi-task learning</concept_desc>
       <concept_significance>500</concept_significance>
       </concept>
   <concept>
       <concept_id>10010405.10010444.10010449</concept_id>
       <concept_desc>Applied computing~Health informatics</concept_desc>
       <concept_significance>300</concept_significance>
       </concept>
 </ccs2012>
\end{CCSXML}

\ccsdesc[500]{Computing methodologies~Multi-task learning}
\ccsdesc[300]{Applied computing~Health informatics}

\keywords{fair multi-task learning; Lipschitz regularization; dynamic control; evaluation alignment; training reliability}


\maketitle
\newcommand\kddavailabilityurl{https://doi.org/10.5281/zenodo.20344205}
\ifdefempty{\kddavailabilityurl}{}{%
\begingroup\small\noindent\raggedright\textbf{Resource Availability:}\\
The source code of this paper has been made publicly available at \url{\kddavailabilityurl}.
\endgroup
}

\section{Introduction}

Multi-task learning (MTL) is widely used when multiple prediction problems share an input representation, ranging from clinical risk prediction to dense prediction in vision. In clinical time-series modeling, one shared encoder may predict mortality, length of stay, decompensation, and phenotype labels from electronic health records~\citep{Harutyunyan2019,johnson2016mimic,Pollard2018eicu}; in vision, one backbone may jointly predict semantic segmentation, depth, and surface normals on NYUv2~\citep{silberman2012indoor,kendall2018uncertainty,liu2019end}. Shared representations can improve efficiency and transfer, but they also create a subtle fairness challenge: prediction discrepancies are often audited through distances induced by the very representation the method itself optimizes.

Lipschitz-style individual fairness formalizes the principle that semantically similar examples should receive similar predictions~\citep{dwork2012fta,pmlr-v119-lohaus20a,jia2024aligning_relational_lipschitz_fairness}. In practice, an audit compares prediction gaps against a tolerance $\delta$ derived from a semantic distance interface. The tolerance is therefore central to what the fairness score measures. If each algorithm induces its own representation scale and therefore its own tolerance, the audit no longer compares all methods under the same semantic standard.

We call this failure mode \textbf{threshold confounding}. A method can appear fairer because it changes the representation scale used to set $\delta$, not because it better controls prediction discrepancies. This is a comparability problem, not merely a hyperparameter-sensitivity issue: once the tolerance is method-induced, the score partly reflects the method's own representation scale rather than its discrepancy control alone. The clinical analysis shows this effect on MTL benchmarks where induced thresholds vary substantially across strong baselines. Appendix~\ref{app:fixed-delta-theory} formalizes this intuition: Bias is Lipschitz in the threshold vector, so a ranking is preserved when the fixed-audit margin exceeds the threshold-drift budget, and any observed reversal implies a fixed-audit gap no larger than that budget. The NYUv2 results in Section~\ref{sec:experiments} further show that shared-threshold auditing remains useful beyond the clinical sequence setting.

A second challenge is that enforcing Lipschitz-style constraints during stochastic non-convex MTL training can be unstable. Fixed penalties may become inactive or overly dominant, producing brittle operating points. ReLiF therefore treats constraint enforcement as a controlled process: the penalty weight is updated from a smoothed violation-rate signal so that the fairness surrogate remains active without overwhelming the task objective.

To address these issues, \textbf{ReLiF} (Reliable Lipschitz Fairness) is introduced as a framework that separates comparable fixed-$\delta$ auditing from training-time controlled regularization. The paper makes three contributions:

\begin{itemize}
    \item \textbf{Threshold-confounding diagnosis and analysis.} We identify threshold confounding in Lipschitz-style MTL fairness auditing and show that threshold drift can change Bias rankings. Our analysis bounds the effect of threshold drift and gives a sufficient condition under which rankings are preserved.
    \item \textbf{Fixed-$\delta$ auditing and controlled training.} We propose ReLiF, which separates evaluation-time fixed-$\delta$ auditing from training-time regularization, uses a shared reference tolerance for comparable reporting, provides a supporting construction for calibrating this tolerance, and uses a violation-rate feedback controller for controlled constraint enforcement.
    \item \textbf{Cross-domain empirical validation.} We evaluate ReLiF on clinical time-series benchmarks and NYUv2 dense prediction with a ResNet50 backbone~\citep{he2016deep}. On NYUv2, ReLiF substantially reduces aligned bias under shared fixed thresholds while maintaining competitive utility; clinical results expose real utility--fairness trade-offs under fixed-$\delta$ auditing.
\end{itemize}

\section{Related Work}
Research on fairness-aware MTL sits at the intersection of multi-objective optimization, Lipschitz-style individual fairness, and evaluation methodology. The central question is whether a fairness audit compares methods under the same semantic tolerance.

\subsection{MTL Optimization and Training Stability}
Modern MTL systems typically rely on shared encoders with task-specific heads, where a primary challenge is mitigating "negative transfer" caused by conflicting gradients or varying loss scales~\citep{Shui2019IJCAI_TaskSimilarity,liu2021conflict}.
Standard solutions rebalance soft objectives. Uncertainty weighting~\citep{kendall2018uncertainty} and GradNorm~\citep{chen2018gradnorm} adjust loss weights, gradient-modification methods such as PCGrad~\citep{yu2020pcgrad} and Recon~\citep{shi2023recon} reduce destructive interference, and multi-objective formulations target Pareto trade-offs~\citep{sener2018pareto_mtl,NEURIPS2019_685bfde0,liu2023famo} or task affinity~\citep{li2024gradtag}. Fairness-oriented MTL optimizers further emphasize explicit resource allocation across tasks~\citep{BanJi2024ICML_FairGrad}. Dense visual MTL benchmarks such as NYUv2 are commonly used to study shared-representation trade-offs among segmentation, depth, and surface normals~\citep{silberman2012indoor,kendall2018uncertainty,liu2019end}. This setting is used not to introduce a new dense-prediction architecture, but to test whether fixed-$\delta$ auditing and ReLiF still apply beyond clinical sequence models.

These methods navigate trade-offs among task utilities, but a fairness constraint must also remain active without overwhelming the task objective~\citep{cotter2019alt_twoplayer,cotter2019jmlr_nondiff,pmlr-v97-cotter19b}. ReLiF therefore studies controlled constraint enforcement alongside task utility.

\subsection{Lipschitz Fairness and the Distance Interface}
Individual fairness requires that similar individuals receive similar predictions~\citep{dwork2012fta,pmlr-v119-mukherjee20a,lahoti2019operationalizing}. 
In practice, this is operationalized via Lipschitz constraints that bound prediction discrepancies relative to a learned semantic distance metric~\citep{jia2024aligning_relational_lipschitz_fairness,hu2023wasserstein}.
Related lines of work study certified or robust individual-fair representations under Lipschitz-style constraints~\citep{NEURIPS2020_55d491cf,Yurochkin2020Training}.
Existing approaches often focus on learning better representations (metric learning)~\citep{pmlr-v119-mukherjee20a,jia2024aligning_relational_lipschitz_fairness} or using task-specific prototypes~\citep{hu2023wasserstein,ding2024feri,jin2024fairmedfm}. Beyond fairness, Lipschitz constraints are also used to improve adversarial robustness and spectral stability~\citep{miyato2018spectral,cisse2017parseval,khromov2024lipschitz_iclr,liu2022siggraph_lipmlp}.

Lipschitz-style auditing is instantiated through a distance--threshold interface: a semantic distance $d(\cdot,\cdot)$ defines which pairs are considered "similar", and a tolerance $\delta$ specifies the similarity radius under which prediction discrepancies should be bounded~\citep{hu2023wasserstein}. 
Most existing work emphasizes improving the distance component, while leaving the tolerance $\delta$ under-specified or method-dependent~\citep{ding2024feri,jin2024fairmedfm}. This lack of control over the semantic tolerance is a primary driver of the evaluation inconsistencies studied in this paper.

\subsection{Evaluation and Threshold Confounding}
The validity of fairness comparisons depends on whether metrics are computed under a shared semantic scale~\citep{wang2024benchmark,gratti2024auc}. A common practice in current pipelines is to adapt the threshold $\delta$ to the model's induced distance distribution for numerical stability~\citep{ding2024feri,jin2024fairmedfm}.
While convenient, this couples evaluation to the method-specific representation scale~\citep{hu2023wasserstein}. As a result, fairness scores can become a "moving target": expanding representation distances can mechanically relax the induced tolerance and reduce hinge-type bias without improving discrepancy control. 
This paper refers to this evaluation artifact as threshold confounding in Lipschitz-style auditing. 
ReLiF restores comparability by using a fixed reference tolerance for evaluation and treats method-induced thresholds only as drift diagnostics. Its controller addresses the separate training problem of keeping the Lipschitz surrogate active in stochastic optimization.

\section{Problem Setup and Metrics}
\label{sec:problem_setup}

This section formalizes the auditing quantities used to distinguish method-induced thresholds from fixed-threshold reporting.
Consider a multi-task prediction setting involving $K$ tasks indexed by $k \in [K]$, where input samples are drawn from a common feature space $\mathcal{X}$ equipped with task-specific supervision. The multi-task model consists of a shared backbone encoder $f_\theta$ and a set of task-specific heads $\{h_k\}_{k=1}^K$. For any input $x \in \mathcal{X}$, head $h_k$ produces a task-specific prediction, from which a scalar audit score $p_k(x) \in [0,1]$ is computed. The same scalar score is used to form both the fixed-audit gaps and the training-time surrogate gaps (Appendix~\ref{app:confidence-computation}), so prediction differences are measured on a unified numeric scale.

\noindent\textbf{Audit-score interface.}
The scalar audit score need not be a calibrated probability for every task. Its role is to put heterogeneous task outputs on a common bounded scale so that cross-task gaps can be computed under a common convention. We require this mapping to be fixed before comparing methods, shared across methods within the same dataset, and evaluated with the same pools and pair-sampling seed. This requirement is especially important for heterogeneous MTL settings, where tasks may have different output types and metric directions. In such cases, utility is still reported with task-native metrics, while fixed-threshold fairness metrics are reported on the shared audit interface.

The standard training objective minimizes the aggregate loss across all tasks:
\begin{equation}
    \min_{\theta} \mathcal{L}_{\text{MTL}}(\theta) = \sum_{k=1}^K L_k(\theta)
    \label{eq:mtl_loss}
\end{equation}
where $L_k(\theta)$ is the task-specific loss. While Eq.~(\ref{eq:mtl_loss}) optimizes for utility, it does not specify cross-task consistency or fairness~\citep{yu2020pcgrad,BanJi2024ICML_FairGrad}.

\subsection{Auditing Interface and Metrics}
Fairness auditing is defined via a distribution $\mathcal{P}_{eval}$ over sampled tuples $(i,j,x,y)$ and a set of semantic thresholds $\{\delta_{ij}\}_{i<j}$, where $1\le i<j\le K$.

\noindent\textbf{Definition 3.1 (Prediction Gap and Violation).}
For a sampled tuple $(i,j,x,y)\sim \mathcal{P}_{eval}$ with $1\le i<j\le K$, the cross-task prediction gap is defined as $\Delta p_{ij}(x, y) = |p_i(x) - p_j(y)|$. A sample satisfies the fairness tolerance if $\Delta p_{ij}(x, y) \leq \delta_{ij}$. The per-pair violation magnitude is:
\begin{equation}
    v_{ij}(x, y) = (\Delta p_{ij}(x, y) - \delta_{ij})_+
    \label{eq:violation_mag}
\end{equation}

Based on this magnitude, two primary metrics are used for Lipschitz fairness:
\begin{itemize}
    \item \textbf{Lipschitz Bias (Bias):} The expected violation magnitude over sampled pairs:
    \begin{equation}
        \text{Bias} = \mathbb{E}_{(i,j,x,y) \sim \mathcal{P}_{eval}} [v_{ij}(x, y)]
    \end{equation}
    \item \textbf{Violation Rate (VR):} The probability of a pair exceeding the threshold:
    \begin{equation}
        \text{VR} = \mathbb{P}_{(i,j,x,y) \sim \mathcal{P}_{eval}} (\Delta p_{ij}(x, y) > \delta_{ij})
    \end{equation}
\end{itemize}

\subsection{The Challenge of Threshold Confounding}
Both Bias and VR are non-increasing in the tolerance levels $\{\delta_{ij}\}$ (see Appendix~\ref{app:fixed-delta-theory}). In standard pipelines, $\{\delta_{ij}\}$ is typically derived from method-induced representation distances. This creates a "moving target": a method that induces larger cosine distances between normalized task prototypes will induce larger $\delta_{ij}$, mechanically lowering the hinge-loss Bias even if the predictor's consistency has not improved. No single $\delta$ is assumed across datasets. A fixed-$\delta$ audit means that within each dataset and evaluation protocol, $\delta$ is selected once from a reference validation distribution and then held fixed for all methods and final reporting. Appendix~\ref{app:fixed-delta-theory} formalizes this observation: the Bias metric is Lipschitz in the threshold vector, and a sufficient condition for preserving a Bias ranking is that the fixed-audit margin is larger than the corresponding threshold-drift budget. Conversely, an observed reversal implies that the fixed-audit gap is no larger than that budget. These results justify treating method-induced thresholds as diagnostics and fixed-$\delta$ scores as the comparable audit.

\subsection{Optimization Goal}
Our objective is to maintain a target \emph{training-time} violation level $r^\star$ for the normalized-margin indicator used by the controller.
Evaluation then reports $\mathrm{Bias}_{\text{fixed }\delta}$ and $\mathrm{VR}_{\text{fixed }\delta}$ under the shared fixed-$\delta$ auditing interface.
\begin{equation}
   r_t \;=\; \frac{1}{|\mathcal{P}_t|}\sum_{(i,j,x,y)\in\mathcal{P}_t}\mathbf{1}\!\left[z_{ij}(x,y)>0\right]\;\approx\; r^\star .
\end{equation}
However, penalty-based enforcement of such constraints in non-convex stochastic optimization can be unstable, often exhibiting oscillations and sensitivity to optimization randomness~\citep{cotter2019alt_twoplayer,cotter2019jmlr_nondiff,gallego_posada2022controlled}.

Consequently, the framework targets two objectives: semantically comparable evaluation via a Fixed-$\delta$ Protocol, and controlled enforcement via feedback control.

\begin{figure*}[!t]
  \centering
  \includegraphics[draft=false,width=\textwidth]{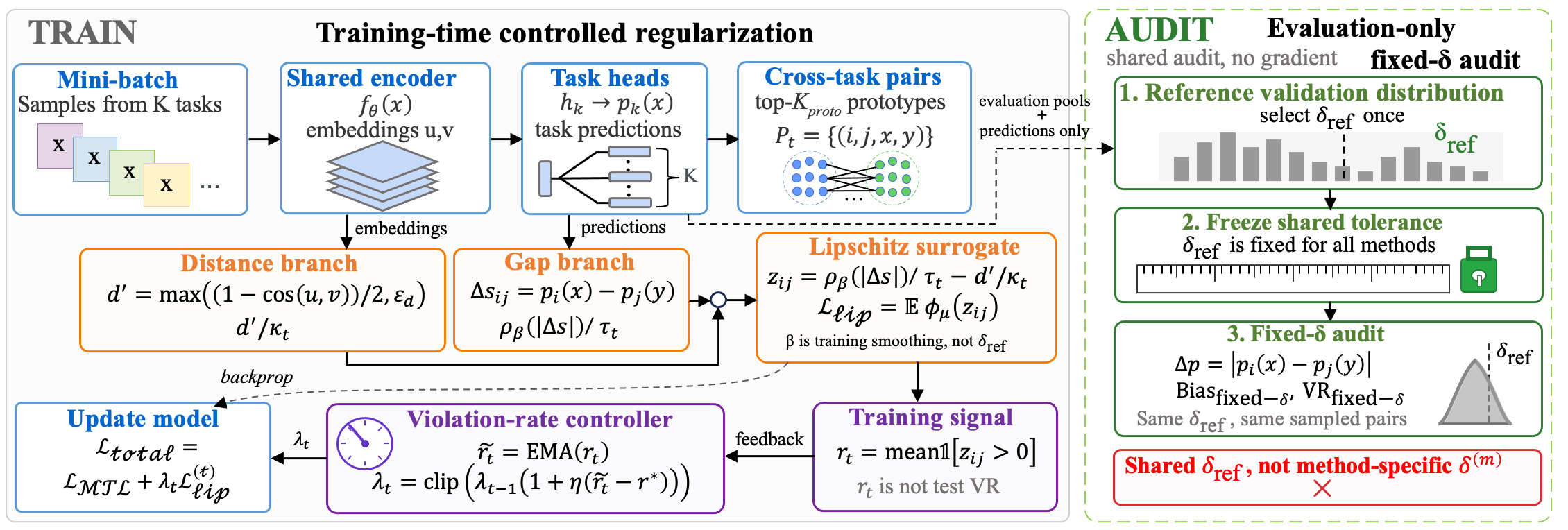}
  \caption{ReLiF separates controlled training from fixed-$\delta$ auditing. During training, mini-batch prototype pairs define normalized margins whose violation rate drives a multiplicative controller for the penalty weight $\lambda$. The model is updated with $\mathcal{L}_{\mathrm{MTL}}+\lambda_t\mathcal{L}_{\mathrm{lip}}^{(t)}$. During evaluation, all methods are audited with the same fixed reference tolerance $\delta_{\mathrm{ref}}$ and the same sampled-pair protocol; no gradient flows through the audit. This separation prevents method-induced representation scales from changing the fairness threshold used for comparison.}
  \Description{A block diagram separating ReLiF training-time controlled regularization from evaluation-only fixed-delta auditing.}
  \label{fig:method}
\end{figure*}

\section{The ReLiF Framework}
\label{sec:ReLiF_framework}

To resolve the dual challenges of threshold confounding and optimization instability, we propose \textbf{ReLiF} (\textbf{Re}liable \textbf{Li}pschitz \textbf{F}airness). ReLiF separates two roles that are often conflated: evaluation uses a fixed reference tolerance, whereas training uses a controlled Lipschitz surrogate whose penalty weight adapts to the current violation signal. Figure~\ref{fig:method} summarizes this separation.

\subsection[Aligned Evaluation via the Fixed-delta Protocol]{Aligned Evaluation via the Fixed-$\delta$ Protocol}
\label{sec:fixed-delta-audit}
Threshold confounding arises when the audit tolerance $\delta$ is derived from the model's own representation distances. To restore semantic consistency, ReLiF evaluates all methods under a Fixed-$\delta$ auditing protocol.

For each dataset and evaluation protocol, $\delta_{\text{ref}}$ is selected once from a reference validation distribution and then held fixed. For clinical benchmarks it is reused for held-out test reporting; for NYUv2 it is reused under the stated dense-prediction evaluation protocol. For any two instances $x$ and $y$ from tasks $i$ and $j$, the prediction gap $|p_i(x)-p_j(y)|$ is audited against the shared tolerance $\delta_{\text{ref}}$, ensuring that all methods are evaluated under the same fairness tolerance within that protocol.

The reference tolerance is a dataset- and protocol-level auditing choice, not a method-specific degree of freedom. In practice, a practitioner specifies a reference representation and a calibration pair distribution, selects the shared tolerance from that reference distribution, and reuses it for all methods. Appendix~\ref{app:fixed-delta-theory} gives an optional split-conformal construction for this step: it calibrates exceedance of reference-pair distances, while the reported fairness metrics still evaluate each method's prediction gaps under the same held-fixed tolerance.

Algorithm~\ref{alg:fixed-delta} summarizes the fixed-$\delta$ auditing procedure used in evaluation. Implementation details of evaluation pool construction, pair sampling, and drift diagnostics are provided in Appendix~\ref{app:dist-sampling}.

\begin{algorithm}[t]
\caption{Fixed-$\delta$ auditing (per run)}
\label{alg:fixed-delta}
\begin{algorithmic}[1]
\REQUIRE Scalar audit scores $\{p_k(\cdot)\}_{k=1}^K$, reference threshold $\delta_{\text{ref}}$, pool size $N_{\mathrm{eval}}$, pairs per task pair $S$, evaluation seed $s_{\mathrm{eval}}$, pair seed $s_{\mathrm{pair}}$
\STATE Construct per-task evaluation pools from valid examples using a fixed evaluation seed
\FOR{each unordered task pair $(i,j)$}
  \STATE Sample $S$ index pairs by independent uniform draws \emph{with replacement} from the two pools
  \STATE Compute gaps $\Delta p_{ij}=|p_i(x)-p_j(y)|$
  \STATE Estimate bias: $\mathrm{Bias}_{ij}\leftarrow \mathrm{mean}\big((\Delta p_{ij}-\delta_{\text{ref}})_+\big)$
  \STATE Estimate VR: $\mathrm{VR}_{ij}\leftarrow \mathrm{mean}\big(\mathbf{1}[\Delta p_{ij}>\delta_{\text{ref}}]\big)$
\ENDFOR
\STATE Aggregate: $\mathrm{Bias}_{\text{fixed }\delta}, \mathrm{VR}_{\text{fixed }\delta} \leftarrow \text{Mean over pairs}$
\STATE Optionally use task prototypes to compute method-induced scales and drift diagnostics for reporting only
\end{algorithmic}
\end{algorithm}

\subsection{Quantile-Normalized Margin via Quantile-Based Scaling}
While a fixed $\delta_{\text{ref}}$ provides a consistent audit, the internal scales of confidence gaps and representation distances can vary across runs due to initialization and loss weighting. To stabilize the training surrogate across steps, ReLiF maintains two running \emph{scale estimates}---a gap scale $\tau_t$ and a distance scale $\kappa_t$---computed from quantile statistics of the current prototype pairs and smoothed by exponential moving average (EMA).

Let $\mathcal{P}_t$ denote the set of mini-batch induced cross-task prototype pairs used by the regularizer at step $t$, where each pair $(x,y)$ is formed by selecting high-confidence prototypes from an unordered task pair $(i,j)$. Distances are computed on the corresponding embeddings $u=f_\theta(x)$ and $v=f_\theta(y)$. The cosine-induced distance is $d(u,v)=(1-\cos(u,v))/2$, and the stabilized training distance is $d'(u,v)=\max\{d(u,v),\epsilon_d\}$ with $\epsilon_d=10^{-3}$.
The 95th percentiles are estimated
\begin{equation}
\begin{aligned}
\widehat{\tau}_t &= \mathrm{Quantile}_{0.95}\Big(\{|\Delta s_{ij}(x,y)| : (i,j,x,y)\in\mathcal{P}_t\}\Big),\\
\widehat{\kappa}_t &= \mathrm{Quantile}_{0.95}\Big(\{d'(f_\theta(x),f_\theta(y)) : (i,j,x,y)\in\mathcal{P}_t\}\Big).
\end{aligned}
\end{equation}
and update running estimates using EMA:
\begin{equation}
    \tau_t \leftarrow (1-\alpha_s)\tau_{t-1}+\alpha_s \widehat{\tau}_t,\qquad
    \kappa_t \leftarrow (1-\alpha_s)\kappa_{t-1}+\alpha_s \widehat{\kappa}_t,
\end{equation}
with a small lower bound to avoid degenerate normalization. This normalization makes the margin comparable across steps and reduces sensitivity to scale changes in either confidence gaps or prototype distances. Appendix~\ref{app:fixed-delta-theory} discusses the corresponding scale-cancellation property for the unsmoothed normalized quantities.

\subsection{Huberized Fairness Surrogate}
The hard indicator in the violation rate is non-differentiable. We reserve $\Delta p_{ij}(x,y)=|p_i(x)-p_j(y)|$ for auditing, and use the signed difference $\Delta s_{ij}(x,y)=p_i(x)-p_j(y)$ inside the training surrogate. ReLiF uses a smooth surrogate based on two Huber-style components: a Huber transform of signed confidence gaps and a huberized hinge on a normalized margin.
Specifically, we robustify gap magnitudes using a Huber function $\rho_{\beta}(\cdot)$ applied to the signed gap $\Delta s_{ij}(x,y)=p_i(x)-p_j(y)$ (equivalently to $|\Delta s_{ij}(x,y)|$ since $\rho_{\beta}$ is even), where $\beta$ controls the transition between the quadratic and linear regimes. The transform is
\begin{equation}
\rho_\beta(u)=
\begin{cases}
u^2/(2\beta), & |u|\le \beta,\\
|u|-\beta/2, & |u|>\beta,
\end{cases}
\label{eq:huber-rho}
\end{equation}
and the huberized hinge
\begin{equation}
\phi_\mu(z)=
\begin{cases}
0, & z\le 0,\\
z^2/(2\mu), & 0<z\le \mu,\\
z-\mu/2, & z>\mu.
\end{cases}
\label{eq:huber-hinge}
\end{equation}
Here $\beta$ and $\mu$ are training-time surrogate hyperparameters and are distinct from the auditing threshold $\delta_{\text{ref}}$ used in the Fixed-$\delta$ Protocol; implementation-level naming details are reported in Appendix~\ref{app:dist-sampling}. With the scale buffers $\tau_t$ and $\kappa_t$, the margin is defined as
\begin{equation}
    z_{ij}(x,y) = \frac{\rho_{\beta}(|\Delta s_{ij}(x,y)|)}{\tau_t} - \frac{d'_{ij}(x,y)}{\kappa_t}.
\end{equation}
Positive margins are then penalized using a huberized hinge $\phi_{\mu}(\cdot)$ with parameter $\mu$~\citep{xu2016huberized_svm}, yielding the Lipschitz regularizer as $\mathcal{L}^{(t)}_{\mathrm{lip}}=\mathbb{E}_{(i,j,x,y)\in\mathcal{P}_t}[\phi_{\mu}(z_{ij}(x,y))]$. Appendix~\ref{app:fixed-delta-theory} states the pointwise approximation bound between the huberized hinge and its unsmoothed positive-margin counterpart.

\subsection{Feedback-Loop Controller}
\label{sec:controller}

Enforcing a target violation rate $r^\star$ in non-convex landscapes is challenging due to the high sensitivity of the objective to the penalty weight $\lambda$. ReLiF adapts $\lambda$ dynamically based on the observed violation signal $r_t$, making constraint enforcement controlled rather than static. The controller consists of two stages: signal smoothing and multiplicative weight updates.

\noindent\textbf{Violation Smoothing:} Since mini-batch violation rates $r_t$ are inherently noisy, we use an EMA to produce a stabilized feedback signal $\tilde{r}_t$:
\begin{equation}
    \tilde{r}_{t} = \alpha_{r} r_{t} + (1 - \alpha_{r}) \tilde{r}_{t-1}, \quad \tilde{r}_{0} = r_{0}
    \label{eq:ema_violation}
\end{equation}
where $\alpha_r \in (0,1)$ is the smoothing rate.

\noindent\textbf{Multiplicative Update Rule:} At training step $t$, after computing the forward-pass violation rate $r_t$, the controller updates the stored weight from $\lambda_{t-1}$ to $\lambda_t$ (after warmup) and applies the new value to the current loss. Its effect on the violation response is observed in subsequent optimization steps. The update uses a multiplicative rule with clipping:
\begin{equation}
    \lambda_{t} = \text{clip}\left( \lambda_{t-1} \cdot (1 + \eta(\tilde{r}_{t} - r^\star)), \lambda_{\min}, \lambda_{\max} \right)
    \label{eq:lambda_update}
\end{equation}
where $\eta$ is the controller step size and $[\lambda_{\min}, \lambda_{\max}]$ defines the operational range. When $\tilde{r}_t > r^\star$, the multiplicative factor increases $\lambda$ and strengthens the regularizer; when $\tilde{r}_t < r^\star$, it decreases $\lambda$ and shifts weight back to the task objective, up to the clipping bounds.
Here $r_t$ is the margin violation rate over $\mathcal{P}_t$, i.e., $r_t=\frac{1}{|\mathcal{P}_t|}\sum_{(i,j,x,y)\in\mathcal{P}_t}\mathbf{1}[z_{ij}(x,y)>0]$, which differs from the fixed-$\delta$ auditing VR.
Appendix~\ref{app:fixed-delta-theory} gives a local mean-square stability statement for the log-penalty controller under explicit monotone-response and bounded-noise assumptions.

\subsection{Training Iteration}
\label{sec:algorithm}

Each training iteration computes $\mathcal{L}_{\text{MTL}}$, forms cross-task prototype pairs, updates the scale buffers $(\tau_t,\kappa_t)$, computes $\mathcal{L}_{\text{lip}}^{(t)}$ and $r_t$, updates $\lambda_t$ after warmup, and optimizes $\mathcal{L}_{\text{MTL}}+\lambda_t\mathcal{L}_{\text{lip}}^{(t)}$. If no valid finite prototype pair is available, the iteration falls back to $\mathcal{L}_{\text{MTL}}$ and leaves the controller state unchanged. Full pseudocode is provided in Appendix~\ref{app:training-pseudocode}.

\begin{table*}[!t]
  \centering
  \small
  \setlength{\tabcolsep}{4pt}
  \renewcommand{\arraystretch}{1.05}
  \caption{NYUv2 results with ResNet50 under shared fixed-$\delta$ auditing. Results are averaged over 7 seeds on the NYUv2 validation-evaluation split; values are mean$\pm$sample standard deviation. Macro and Worst use normalized higher-is-better task scores. Segmentation mean intersection-over-union (mIoU) is higher-is-better, while depth root mean squared error (RMSE) and normal mean error are lower-is-better raw metrics. Bias@$\delta_{p75}$ and Bias@$\delta_{p50}$ use shared thresholds selected once from a reference validation distribution and held fixed for all methods, with pair\_seed=42. In this split, the lowest normalized task score is segmentation for all methods.}
  \label{tab:nyuv2-main}
  \begin{tabular}{lccccccc}
    \toprule
    Method & Worst $\uparrow$ & Macro $\uparrow$ & Seg mIoU $\uparrow$ & Depth RMSE $\downarrow$ & Normal mean $\downarrow$ & Bias@$\delta_{p75}$ $\downarrow$ & Bias@$\delta_{p50}$ $\downarrow$ \\
    \midrule
    ERM & $0.4790\pm0.0026$ & $0.6382\pm0.0016$ & $0.4790\pm0.0026$ & $0.6634\pm0.0082$ & $29.8199\pm0.1312$ & $0.2284\pm0.0185$ & $0.2380\pm0.0200$ \\
    Uncertainty & $\underline{0.4810\pm0.0025}$ & $\underline{0.6395\pm0.0017}$ & $\underline{0.4810\pm0.0025}$ & $0.6609\pm0.0121$ & $\underline{29.6318\pm0.1490}$ & $0.2312\pm0.0231$ & $0.2407\pm0.0222$ \\
    GradNorm & $0.4800\pm0.0029$ & $0.6390\pm0.0017$ & $0.4800\pm0.0029$ & $\underline{0.6599\pm0.0103}$ & $29.7830\pm0.1331$ & $0.2279\pm0.0326$ & $0.2374\pm0.0303$ \\
    PCGrad & $0.4721\pm0.0063$ & $0.6329\pm0.0034$ & $0.4721\pm0.0063$ & $0.6812\pm0.0138$ & $30.3050\pm0.1188$ & $0.2292\pm0.0348$ & $0.2389\pm0.0311$ \\
    FairGrad & $0.4770\pm0.0043$ & $0.6370\pm0.0029$ & $0.4770\pm0.0043$ & $0.6804\pm0.0166$ & $\mathbf{28.9833\pm0.0901}$ & $\underline{0.2141\pm0.0329}$ & $\underline{0.2234\pm0.0310}$ \\
    CATS & $0.4781\pm0.0024$ & $0.6375\pm0.0018$ & $0.4781\pm0.0024$ & $\mathbf{0.6594\pm0.0084}$ & $30.3028\pm0.1064$ & $0.2330\pm0.0335$ & $0.2428\pm0.0327$ \\
    ReLiF & $\mathbf{0.4820\pm0.0030}$ & $\mathbf{0.6398\pm0.0014}$ & $\mathbf{0.4820\pm0.0030}$ & $0.6610\pm0.0092$ & $29.6527\pm0.1401$ & $\mathbf{0.0062\pm0.0090}$ & $\mathbf{0.0085\pm0.0127}$ \\
    \bottomrule
  \end{tabular}
\end{table*}

\section{Experiments}
\label{sec:experiments}

The evaluation covers two domains: clinical time-series prediction and non-medical dense prediction in vision. The experiments answer three questions. \textbf{RQ1:} Does fixed-$\delta$ auditing expose comparable utility--fairness operating points across domains? \textbf{RQ2:} When does ReLiF reduce aligned bias under shared fixed thresholds while maintaining competitive utility? \textbf{RQ3:} What do clinical benchmarks reveal about the trade-offs, sensitivity, and reliability of ReLiF?

\subsection{Experimental Setup}

\noindent\textbf{NYUv2 dense prediction.}
The cross-domain NYUv2 experiment uses a ResNet50 shared backbone~\citep{he2016deep} and three dense prediction tasks: semantic segmentation, depth estimation, and surface normal prediction. We evaluate empirical risk minimization (ERM), Uncertainty, GradNorm, PCGrad, FairGrad, CATS~\citep{liu2023kdd_cats}, and ReLiF under the same training budget. NYUv2 is evaluated on its standard validation split. Results are averaged over 7 seeds and reported with sample standard deviations. Macro and Worst are computed from normalized higher-is-better task scores; raw task metrics are also reported for interpretability. Fixed-threshold auditing uses shared $\delta_{p75}$ and $\delta_{p50}$ values selected from the same reference validation distribution and held fixed for all methods, with pair\_seed=42. ReLiF uses a single controller setting across all NYUv2 seeds; controller settings are reported in Appendix~\ref{app:dist-sampling}.

For NYUv2 auditing, each dense-prediction output is mapped to an image-level scalar audit score shared by all methods. Segmentation uses the valid-region mean max-softmax confidence. Depth and normal use valid-region mean decoder-feature norms, squashed to $[0,1]$ with same-seed ERM validation quantiles. These scalar audit scores instantiate $p_k(x)$ for the fixed-pair auditing protocol; Appendix~\ref{app:confidence-computation} gives the full construction.
This construction is an audit interface rather than a replacement for task metrics: segmentation mean intersection-over-union (mIoU), depth root mean squared error (RMSE), and normal error remain the utility measures, while fixed-$\delta$ Bias and VR compare scalar audit gaps under the same pair protocol. The NYUv2 results should therefore be read as evidence from the standard validation split for cross-domain auditing comparability; richer notions of dense-prediction fairness remain a separate direction.

\noindent\textbf{Clinical benchmarks.}
The clinical benchmarks are the MIMIC-III Clinical Database v1.4~\citep{johnson2016mimic} and the eICU Collaborative Research Database~\citep{Pollard2018eicu}. MIMIC-III follows the Harutyunyan four-task setting~\citep{Harutyunyan2019}: in-hospital mortality, decompensation, phenotype, and length of stay. eICU uses a two-task configuration: mortality and length of stay. Clinical results are reported on the pre-specified held-out test splits, with checkpoint selection performed on validation data only. Results are reported over 7 runs on MIMIC-III and 3 runs on eICU.

\noindent\textbf{Baselines and audit protocol.}
All methods share the same dataset-specific backbone and task heads. The baselines cover standard MTL (ERM, Uncertainty), task-balancing optimizers (GradNorm, PCGrad, CATS), and fairness-aware MTL (FairGrad). Clinical fixed-$\delta$ auditing uses $\delta_{\mathrm{ref}}=0.275$ on MIMIC-III and $\delta_{\mathrm{ref}}=0.06$ on eICU. Method-induced thresholds are used only as diagnostics for threshold drift, not as the comparison metric. Additional implementation details and controller settings are in Appendix~\ref{app:dist-sampling}.

\subsection{NYUv2 Cross-Domain Validation (RQ1--RQ2)}

Table~\ref{tab:nyuv2-main} provides the cross-domain NYUv2 results. ReLiF obtains the highest mean Worst and Macro scores, although the utility differences are small across several baselines. The main separation appears in aligned fairness: under the shared $\delta_{p75}$ audit, ReLiF reduces mean aligned bias to $0.0062$, whereas the strongest non-ReLiF baseline remains above $0.21$. The same pattern holds under $\delta_{p50}$. The aligned-bias separation is therefore not merely a change in rank. Relative to the strongest non-ReLiF baseline, ReLiF reduces Bias@$\delta_{p75}$ from $0.2141$ to $0.0062$ and Bias@$\delta_{p50}$ from $0.2234$ to $0.0085$, while remaining within the same narrow utility band. This evidence suggests that fixed-$\delta$ auditing and controlled regularization remain useful beyond clinical sequence models, in a non-medical dense-prediction setting with a different backbone.

\begin{table}[!t]
  \centering
  \small
  \setlength{\tabcolsep}{4pt}
  \renewcommand{\arraystretch}{1.05}
  \caption{MIMIC-III validation-induced thresholds reveal method-dependent audit scales. Values are mean$\pm$std over 7 validation runs. The table is a threshold-drift diagnostic used to inspect reference-tolerance selection; test fairness reporting uses the held-fixed $\delta_{\mathrm{ref}}=0.275$ audit in Table~\ref{tab:clinical-compact}.}
  \label{tab:mimic-threshold-drift}
  \begin{tabular}{lccc}
    \toprule
    Method & $\bar{\delta}_{\mathrm{val}}$ & $\Delta$ vs ReLiF & Ratio \\
    \midrule
    ReLiF & $\mathbf{0.2752\pm0.0625}$ & \textbf{--} & $\mathbf{1.00\times}$ \\
    GradNorm & $\underline{0.3651\pm0.0604}$ & $\underline{+0.0899}$ & $\underline{1.33\times}$ \\
    FairGrad & $0.3940\pm0.0378$ & $+0.1188$ & $1.43\times$ \\
    Uncertainty & $0.4155\pm0.0504$ & $+0.1403$ & $1.51\times$ \\
    \bottomrule
  \end{tabular}
\end{table}

\subsection{Clinical Trade-Offs and Threshold Confounding (RQ1--RQ3)}

Clinical benchmarks provide complementary trade-offs under the fixed audit. On MIMIC-III, GradNorm achieves the lowest $\mathrm{Bias}_{\mathrm{fixed}\text{-}\delta}$, but its worst-task utility is lower than both ReLiF and FairGrad. FairGrad obtains the strongest MIMIC utility, while ReLiF provides a controlled fairness-regularized alternative with lower aligned bias than FairGrad and higher worst-task utility than GradNorm. On eICU, a simpler two-task setting, ReLiF has the highest Worst score but not the lowest fixed-$\delta$ bias. Several methods nevertheless change ranking after switching from raw to fixed-$\delta$ auditing, indicating that threshold confounding remains an evaluation issue even when ReLiF is not the lowest-bias method.

\begin{table}[!t]
  \centering
  \small
  \setlength{\tabcolsep}{3pt}
  \renewcommand{\arraystretch}{1.05}
  \caption{Clinical fixed-$\delta$ trade-offs. MIMIC-III uses 7 test runs with $\delta_{\mathrm{ref}}=0.275$; eICU uses 3 test runs with $\delta_{\mathrm{ref}}=0.06$. Ranking shifts are computed over all seven methods when switching from raw to fixed-$\delta$ auditing. The table reports representative operating points.}
  \label{tab:clinical-compact}
  \resizebox{\columnwidth}{!}{%
  \begin{tabular}{llcccc}
    \toprule
    Dataset & Method & Worst $\uparrow$ & Macro $\uparrow$ & Bias$_{\mathrm{fixed}\text{-}\delta}$ $\downarrow$ & Shift \\
    \midrule
    MIMIC-III & GradNorm & $0.6330\pm0.0101$ & $\underline{0.7546\pm0.0073}$ & $\mathbf{0.1927\pm0.0099}$ & -- \\
    MIMIC-III & FairGrad & $\mathbf{0.6583\pm0.0038}$ & $\mathbf{0.7672\pm0.0048}$ & $0.2307\pm0.0116$ & -- \\
    MIMIC-III & ReLiF & $\underline{0.6546\pm0.0054}$ & $0.7518\pm0.0058$ & $\underline{0.2237\pm0.0159}$ & -- \\
    \midrule
    eICU & CATS & $\underline{0.6128\pm0.0020}$ & $0.7584\pm0.0008$ & $\mathbf{0.3001\pm0.0066}$ & $7\rightarrow1$ \\
    eICU & GradNorm & $0.6091\pm0.0081$ & $0.7620\pm0.0058$ & $0.3070\pm0.0148$ & $6\rightarrow3$ \\
    eICU & FairGrad & $0.6126\pm0.0023$ & $\mathbf{0.7650\pm0.0015}$ & $\underline{0.3067\pm0.0079}$ & $1\rightarrow2$ \\
    eICU & ReLiF & $\mathbf{0.6139\pm0.0041}$ & $\underline{0.7624\pm0.0029}$ & $0.3155\pm0.0104$ & $5\rightarrow5$ \\
    \bottomrule
  \end{tabular}}
\end{table}

\begin{figure*}[!t]
  \centering
  \includegraphics[draft=false,width=0.8\textwidth]{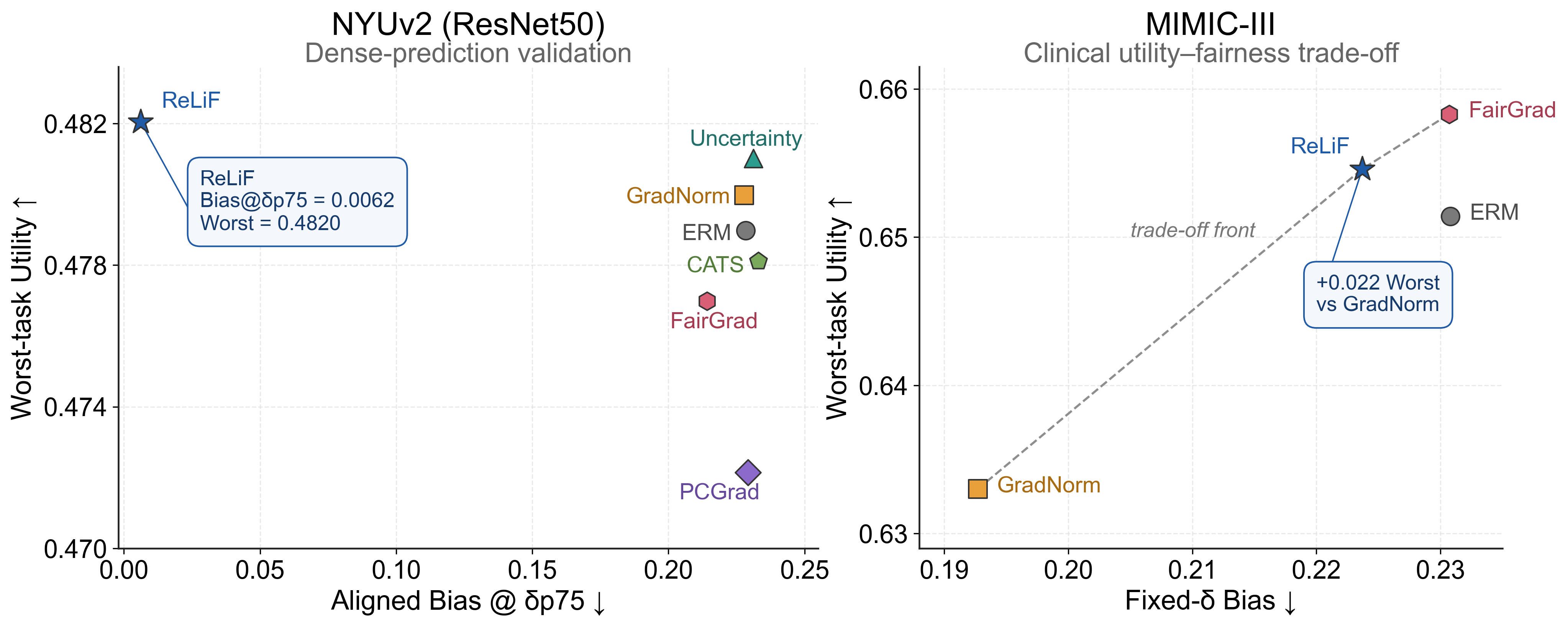}
  \caption{Utility--fairness operating points under shared fixed-$\delta$ auditing. Each marker is one method; the upper-left region is preferred. Left: NYUv2 with ResNet50, using Bias@$\delta_{p75}$. Right: MIMIC-III test split with $\delta_{\mathrm{ref}}=0.275$. NYUv2 shows that ReLiF substantially reduces aligned bias while maintaining competitive utility. MIMIC-III exposes a genuine trade-off: GradNorm obtains the lowest fixed-$\delta$ bias, FairGrad obtains the highest worst-task utility, and ReLiF occupies a controlled point between these two extremes.}
  \Description{Two scatter plots of fixed-delta aligned bias versus worst-task utility for NYUv2 and MIMIC-III.}
  \label{fig:utility-fairness}
\end{figure*}

\noindent\textbf{Threshold-confounding diagnostics.}
Table~\ref{tab:mimic-threshold-drift} shows that MIMIC-III validation-induced thresholds span $0.2752$ to $0.4155$, a $1.51\times$ range across representative methods. This is the stage where the reference tolerance is inspected and fixed before held-out test reporting, so validation-induced thresholds are the appropriate diagnostic for threshold selection. If each method carries its own induced tolerance into evaluation, raw fairness scores are computed under different semantic scales. Because Bias and VR are non-increasing in $\delta$, larger induced thresholds can mechanically reduce raw bias. Method-induced thresholds are therefore treated only as diagnostics, while fixed-$\delta$ bias is used as the comparable auditing metric. The eICU ranking shifts in Table~\ref{tab:clinical-compact} further show that this issue persists beyond the primary benchmark.
The ranking-margin result in Appendix~\ref{app:fixed-delta-theory} explains the mechanism behind such shifts: a Bias ordering can change under method-induced thresholds only when the fixed-audit gap is no larger than the combined threshold-drift budget. Thus, the observed shifts are consistent with the failure mode characterized by the threshold-drift analysis rather than isolated empirical anomalies.

\begin{table}[!t]
  \centering
  \small
  \setlength{\tabcolsep}{3pt}
  \renewcommand{\arraystretch}{1.05}
  \caption{MIMIC-III ReLiF component ablation under the same fixed-$\delta$ test audit. Values are mean$\pm$std over 7 runs. Bold denotes the full ReLiF configuration.}
  \label{tab:mimic-ablation}
  \begin{tabular}{lcc}
    \toprule
    Variant & Worst $\uparrow$ & Bias$_{\mathrm{fixed}\text{-}\delta}$ $\downarrow$ \\
    \midrule
    ReLiF & $\mathbf{0.6546\pm0.0054}$ & $\mathbf{0.2237\pm0.0159}$ \\
    ReLiF w/o Lip & $\underline{0.6508\pm0.0056}$ & $\underline{0.2414\pm0.0187}$ \\
    ReLiF-no-align & $0.6486\pm0.0041$ & $0.2415\pm0.0254$ \\
    \bottomrule
  \end{tabular}
\end{table}

\noindent\textbf{Component evidence.}
Table~\ref{tab:mimic-ablation} isolates the mechanisms in the full ReLiF configuration. Removing the Lipschitz branch increases fixed-$\delta$ bias from $0.2237$ to $0.2414$ while leaving Worst essentially unchanged, showing that the improvement is not solely an artifact of reporting under the fixed audit. Disabling quantile alignment produces a similar bias degradation, supporting the role of scale-normalized margins in keeping the surrogate comparable across training steps. A static-$\lambda$ grid baseline, audited separately and aggregated over 5 penalty values and 7 runs, degrades Worst to $0.6208$ and raises fixed-$\delta$ bias to $0.2550$, consistent with feedback control helping avoid brittle fixed-penalty solutions in this setting.
Together with Table~\ref{tab:clinical-compact}, this ablation separates two messages: ReLiF's components improve the controlled Lipschitz regularizer, but task-balancing baselines can still achieve lower-bias or higher-utility clinical results under the same fixed audit.

\noindent\textbf{Sensitivity and controller checks.}
A wider fixed-threshold sweep on MIMIC-III is reported in Appendix~\ref{app:sensitivity}. As $\delta$ increases, all methods' aligned bias decreases monotonically, as expected from the auditing definition. The qualitative clinical pattern is stable: GradNorm remains the lower-bias method, while ReLiF preserves higher worst-task utility than GradNorm. Controller sensitivity is summarized in Appendix~\ref{app:controller-sensitivity}; the checks are local sensitivity analyses rather than new main results. With moderate $\eta$, all runs train successfully, while overly aggressive $\eta$ can push the controller toward less favorable results. The $r^\star$ sweep similarly moves the utility--fairness trade-off rather than causing immediate failure.

\subsection{Empirical Synthesis}
\label{sec:evidence-chain}

The empirical evidence is organized as a chain rather than as a single leaderboard. Table~\ref{tab:nyuv2-main} is the primary cross-domain result: it changes both the data domain and the backbone relative to the clinical sequence setting, and shows that ReLiF substantially reduces aligned bias under a shared audit while staying in the same narrow utility band as the strongest dense-prediction baselines. Figure~\ref{fig:utility-fairness} then visualizes why the result should be interpreted as a trade-off: the preferred region is lower aligned bias and higher worst-task utility, but different methods can occupy different parts of this region depending on the domain.

The clinical results provide the complementary stress test. Table~\ref{tab:clinical-compact} shows that fixed-$\delta$ auditing does not force a single winner: GradNorm occupies a lower-bias MIMIC point, FairGrad occupies a higher-utility point, and ReLiF provides a controlled fairness-regularized point between them. This is the intended role of the fixed-$\delta$ audit: once the evaluation tolerance is shared, the remaining differences expose real utility--fairness trade-offs that method-induced thresholds can otherwise obscure.

The last two result blocks isolate the two mechanisms behind this interpretation. Table~\ref{tab:mimic-threshold-drift} shows that validation-induced thresholds differ substantially across representative methods before the held-fixed audit is chosen, supporting threshold drift as an evaluation confounder. Table~\ref{tab:mimic-ablation} shows that ReLiF's clinical result is not simply a reporting artifact: removing the Lipschitz branch or disabling quantile alignment worsens fixed-threshold bias under the same audit. Together, the experiments support the paper's main claim: fixed-$\delta$ auditing provides a comparable basis for fairness reporting, and ReLiF is one controlled regularization mechanism that can be useful in shared-representation MTL settings.

\section{Discussion}
\label{sec:discussion}

\subsection{Fixed-threshold auditing}
When $\delta$ is derived from method-induced representation distances, the audit itself becomes method-dependent. Fixed-$\delta$ auditing removes this degree of freedom within a dataset: fairness differences are measured under a shared tolerance rather than under thresholds that each method implicitly adjusts. This does not make $\delta_{\mathrm{ref}}$ a cross-domain constant; it makes the comparison controlled within the stated evaluation protocol.
The formal analysis supports this reporting choice. Bias changes at most linearly with threshold drift. Under the stated audit distribution, any observed reversal of a Bias ranking must therefore have a fixed-audit gap no larger than the corresponding threshold-drift budget. The appendix also provides an optional split-conformal construction for $\delta_{\mathrm{ref}}$: a tolerance can be selected from reference-pair distances and then held fixed for all methods. This makes $\delta_{\mathrm{ref}}$ a reference audit threshold, not a per-method tuning knob. In new settings, practitioners can choose the reference representation and calibration pair distribution according to the intended semantic similarity notion, then reuse the resulting tolerance for all compared methods. Thus, fixed-$\delta$ auditing should be viewed as a comparability protocol: it removes one evaluation degree of freedom before comparing methods.

\subsection{When ReLiF is most useful}
The experiments suggest three regimes. First, when a shared backbone creates strongly coupled task interactions, as in NYUv2 dense prediction, controlled Lipschitz regularization yields the clearest aligned-fairness gain while keeping utility competitive. Second, in clinical MTL with heterogeneous task losses, task-balancing methods such as GradNorm can reduce fixed-$\delta$ bias at the cost of worst-task utility, whereas ReLiF offers a controlled compromise between aggressive discrepancy reduction and worst-task preservation. Third, when task coupling is weaker or the number of tasks is small, as in eICU, the gap between methods under fixed-$\delta$ auditing can shrink and ReLiF need not be the lowest-bias method. The MIMIC-III component ablation supports ReLiF as a controlled regularization design, while the clinical baseline comparisons still show real trade-offs. ReLiF should therefore be viewed as a controlled fairness-regularized framework for comparable bounded-discrepancy auditing, rather than a replacement for pure task balancing in every setting.

\noindent\textbf{Relation to task balancing.}
The clinical results also clarify how ReLiF relates to task-balancing methods. GradNorm, FairGrad, and related methods primarily change the optimization of the task losses, whereas ReLiF controls a separate Lipschitz-style discrepancy penalty under a fixed audit. These mechanisms are therefore not mutually exclusive. A hybrid design could combine a task-balancing rule for $\mathcal{L}_{\mathrm{MTL}}$ with the ReLiF controller for the fairness surrogate. This combined design is not evaluated here because the goal of this paper is to isolate the evaluation confounder and the controlled Lipschitz regularizer. Still, the MIMIC-III results suggest why such a hybrid may be useful: GradNorm reaches lower aligned bias with lower worst-task utility, FairGrad achieves the strongest MIMIC-III utility in this compact comparison, and ReLiF provides a controlled fairness-regularized point between them. Combining these roles is a natural direction for future work.

\subsection{Reporting practice}
For Lipschitz-style MTL fairness, we recommend reporting method-induced thresholds as diagnostics, fixed-$\delta$ Bias/VR as the comparable audit, and controller settings as part of the reproducibility protocol. A complete report should state the source of the reference tolerance, the calibration or validation distribution used to set it, the pair-sampling seed and pool construction, the fixed-threshold Bias/VR values, the method-induced thresholds used only for diagnostics, and the utility metrics for the same checkpoint. These details make it possible to distinguish a method that reduces prediction discrepancies from one that changes the scale of the audit.

This checklist is intentionally protocol-level. It asks authors to disclose the semantic interface and audit construction before comparing methods, rather than treating a fairness score as self-explanatory. In particular, method-induced thresholds should be reported only as drift diagnostics; the comparison itself should be made under the held-fixed tolerance and the same sampled-pair protocol.

\noindent\textbf{Practical workflow.}
In a new application, the protocol can be instantiated in four steps. First, define the scalar audit interface for each task, including how heterogeneous outputs are mapped to comparable scores. Second, choose a reference representation and calibration pair distribution that match the intended semantic similarity notion. Third, select the shared tolerance from this reference distribution and hold it fixed for all methods and all final comparisons. Fourth, report utility, fixed-threshold Bias/VR, method-induced thresholds as diagnostics, and controller settings together. This workflow separates three decisions that are often conflated: the semantic interface used to define comparable pairs, the fixed tolerance used for evaluation, and the training mechanism used to reach a useful solution.

\subsection{Limitations}
The protocol and method have clear boundaries. First, fixed-$\delta$ auditing controls within-protocol comparability, but it does not establish cross-domain semantic equivalence. Choosing a reference representation and calibration pair distribution remains a domain decision. Second, the dense-prediction audit scores used for NYUv2 are scalar protocol proxies over per-pixel outputs; they preserve the audit interface but do not capture every notion of fairness in dense prediction.

Third, the controller analysis in Appendix~\ref{app:fixed-delta-theory} is local in the log-penalty coordinate under explicit monotone-response and bounded-noise assumptions; it is not a global convergence result for non-convex constrained training. Fourth, ReLiF is not the lowest-bias method in every clinical regime. When task balancing aligns well with the fixed-$\delta$ audit, methods such as GradNorm can reduce aligned bias at the cost of worst-task utility.

These limitations define the intended scope of the protocol and method. The paper targets comparable auditing and controlled Lipschitz enforcement, not method dominance across all settings, a single semantic tolerance for every domain, or full semantic fairness certification.

\section{Conclusion}
\label{sec:conclusion}

This paper identified threshold confounding as an evaluation pitfall in Lipschitz-style fairness for multi-task learning: when each method induces its own tolerance, reported fairness scores may reflect different semantic standards. ReLiF addresses this issue by separating evaluation-time fixed-$\delta$ auditing from training-time controlled Lipschitz regularization.

Across clinical time-series benchmarks and NYUv2 dense prediction, fixed-$\delta$ auditing exposes utility--fairness operating points that method-induced thresholds can obscure. The NYUv2 results show that ReLiF can substantially reduce aligned bias under shared thresholds while maintaining competitive utility in a non-clinical vision setting. The clinical results clarify the scope of the claim: ReLiF is not the lowest-bias method in every regime, but the fixed-$\delta$ protocol makes these trade-offs visible and comparable.

The formal analysis supports this protocol in four ways: it bounds the effect of threshold drift, provides a construction for calibrating the reference tolerance, relates the training surrogate to the normalized margin, and characterizes a locally stable controller regime under explicit assumptions. Future work should extend fixed-$\delta$ auditing to richer semantic similarity notions, including subgroup-conditional or time-varying tolerances, and characterize when controller settings transfer across domains.

\begin{acks}
This research was partially funded by the National Natural Science Foundation of China under Grant No. 42371476. We thank the reviewers for their constructive comments.
\end{acks}

\FloatBarrier
\bibliographystyle{ACM-Reference-Format}
\bibliography{refs}

\FloatBarrier
\appendix

\section*{Appendix}

\section{Training and Audit Details}
\label{app:dist-sampling}

\subsection{Training pseudocode}
\label{app:training-pseudocode}
Algorithm~\ref{alg:relif-training-appendix} summarizes the implementation-level training loop. The controller operates on the training-time normalized margin signal; the fixed-$\delta$ audit is a separate evaluation protocol and does not backpropagate through the model.
\subsection{Implementation settings}
\label{app:implementation-settings}
Table~\ref{tab:hyperparams} reports the implementation and reporting
settings used in the experiments.

\subsection{Confidence computation}
\label{app:confidence-computation}
For clinical classification tasks, task outputs are converted to scalar audit scores in $[0,1]$: sigmoid probability for binary mortality, mean sigmoid probability across valid decompensation time steps, maximum softmax probability for length-of-stay bins, and maximum sigmoid probability for multi-label phenotype prediction. For NYUv2, auditing is image-level: the shared representation is obtained by global average pooling the encoder feature map. Segmentation uses the valid-region mean of per-pixel max-softmax probabilities. Depth and normal use the valid-region mean channel-wise $\ell_2$ norm of the decoder feature map, squashed to $[0,1]$ by $\mathrm{clip}((s-q_{10})/(q_{90}-q_{10}+10^{-6}),0,1)$, where $q_{10}$ and $q_{90}$ are same-seed ERM validation quantiles. These audit-score definitions are shared by all methods under the fixed pair-sampling protocol.

\begin{algorithm}[!t]
\caption{ReLiF training loop}
\label{alg:relif-training-appendix}
\begin{algorithmic}[1]
\REQUIRE Mini-batches, MTL loss $\mathcal{L}_{\mathrm{MTL}}$, warmup $T_w$, bounds $[\lambda_{\min},\lambda_{\max}]$, target $r^\star$, step size $\eta$.
\STATE Initialize $\lambda_0$, scale buffers $\tau_0,\kappa_0$, and violation EMA $\widetilde r_0$.
\FOR{training step $t=1,2,\ldots$}
  \STATE Compute task losses and $\mathcal{L}_{\mathrm{MTL}}$.
  \STATE Form cross-task prototype pairs and compute signed confidence gaps $\Delta s_{ij}=p_i(x)-p_j(y)$ and clipped distances $d'$.
  \STATE Update stop-gradient scale buffers using $\mathrm{Quantile}_{0.95}(|\Delta s|)$ and $\mathrm{Quantile}_{0.95}(d')$ with EMA.
  \STATE Compute normalized margins $m=\rho_\beta(|\Delta s|)/\tau_t-d'/\kappa_t$, loss $\mathcal{L}_{\mathrm{lip}}=\mathbb{E}[\phi_\mu(m)]$, and training violation rate $r_t=\mathbb{P}(m>0)$.
  \STATE Update $\widetilde r_t\leftarrow\mathrm{EMA}(r_t)$; if $t>T_w$, set $\lambda_t\leftarrow\mathrm{clip}(\lambda_{t-1}(1+\eta(\widetilde r_t-r^\star)))$.
  \STATE Optimize $\mathcal{L}_{\mathrm{MTL}}+\lambda_t\mathcal{L}_{\mathrm{lip}}$.
\ENDFOR
\end{algorithmic}
\end{algorithm}

\begin{table}[!t]
\centering
\scriptsize
\setlength{\tabcolsep}{3pt}
\renewcommand{\arraystretch}{1.05}
\caption{Core implementation and reporting settings. Controller settings are reported for reproducibility; they are not used as audit thresholds.}
\label{tab:hyperparams}
\begin{tabular}{p{0.30\columnwidth}p{0.64\columnwidth}}
\toprule
Component & Setting \\
\midrule
Clinical model & 2-layer long short-term memory (LSTM), hidden size 256, dropout 0.3; AdamW, lr $10^{-3}$, weight decay $10^{-5}$, batch size 64, gradient clipping 1.0. \\
NYUv2 model & ResNet50 shared backbone with segmentation, depth, and normal heads; all methods use the same backbone and training budget. \\
Distances & Cosine-induced distance $(1-\cos(a,b))/2$ on $\ell_2$-normalized prototype embeddings; training distances are clipped at $10^{-3}$. \\
Scale buffers & 95th-percentile normalization with EMA weight 0.1 for $\tau_t$ and $\kappa_t$; buffers are stop-gradient and clamped at $10^{-4}$. \\
Surrogate & Huber parameter $\beta=0.1$ and hinge-smoothing parameter $\mu=0.05$; implementation-level \texttt{delta} denotes $\beta$, not the audit threshold $\delta_{\mathrm{ref}}$. \\
Controller & $\lambda_0=0.1$, $\lambda_{\min}=0.01$, $\lambda_{\max}=1.0$, warmup 100 steps, violation EMA weight 0.1. \\
ReLiF settings & NYUv2 uses $r^\star=0.24,\eta=10^{-3}$ for all seven seeds. eICU uses $r^\star=0.16,\eta=10^{-3}$. MIMIC-III selects $(r^\star,\eta)$ on validation per data-seed group before held-out test reporting; selected values lie within the local controller range examined in Appendix~\ref{app:controller-sensitivity}. \\
Audit sampling & Unless stated otherwise, fixed-$\delta$ audit uses $N_{\mathrm{eval}}=256$, $S=4096$, and pair\_seed=42. \\
\bottomrule
\end{tabular}
\end{table}

\section{Additional Analysis}
\label{app:sensitivity}

\subsection[Formalized Analysis of Fixed-delta Auditing and Control]{Formalized Analysis of Fixed-$\delta$ Auditing and Control}
\label{app:fixed-delta-theory}
This appendix formalizes the analysis sketched in the main text. The results below do not assert global convergence of non-convex constrained training. They state the audit-comparability properties of fixed thresholds, a construction for calibrating the reference tolerance, the connection between the huberized surrogate and its unsmoothed positive-margin counterpart, and a local controller stability statement under explicit response assumptions.

Let an audit tuple be $a=(i,j,x,y)$ with $i<j$ and $a\sim P_{\mathrm{eval}}$. For method $m$, write
$\Delta_m(a)=|p_i^m(x)-p_j^m(y)|\in[0,1]$. For threshold vector $\boldsymbol{\delta}=(\delta_{ij})_{i<j}$, define
\begin{equation}
\begin{aligned}
B_m(\boldsymbol{\delta})
&=\mathbb E_{a\sim P_{\mathrm{eval}}}
\!\left[(\Delta_m(a)-\delta_{ij(a)})_+\right],\\
V_m(\boldsymbol{\delta})
&=\Pr_{a\sim P_{\mathrm{eval}}}
\!\left(\Delta_m(a)>\delta_{ij(a)}\right).
\end{aligned}
\end{equation}
Let $w_{ij}=\Pr_{a\sim P_{\mathrm{eval}}}(ij(a)=(i,j))$.

\begin{reliftheorem}[Threshold monotonicity and Lipschitzness]
\label{thm:threshold-lipschitz}
For any method $m$, if $\boldsymbol{\delta}\le \boldsymbol{\delta}'$ component-wise, then
$B_m(\boldsymbol{\delta}')\le B_m(\boldsymbol{\delta})$ and
$V_m(\boldsymbol{\delta}')\le V_m(\boldsymbol{\delta})$. Moreover,
\begin{equation}
|B_m(\boldsymbol{\delta})-B_m(\boldsymbol{\delta}')|
\le
\sum_{i<j}w_{ij}|\delta_{ij}-\delta'_{ij}|.
\end{equation}
\end{reliftheorem}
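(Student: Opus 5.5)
The plan is to argue pointwise, tuple by tuple, and then pass to expectations. Fix a method $m$ and an audit tuple $a=(i,j,x,y)$, and write $g=\Delta_m(a)\in[0,1]$ and $(i,j)=ij(a)$. Both quantities depend on the threshold vector only through the single coordinate $\delta_{ij}$ selected by $a$. So it suffices to study the two scalar maps $t\mapsto (g-t)_+$ and $t\mapsto \mathbf 1[g>t]$, and integrate the resulting pointwise inequalities against $P_{\mathrm{eval}}$. Measurability and integrability are immediate, since both integrands are bounded on the event structure given by $ij(a)$: the first by $1+\max|\delta_{ij}|$ and the second by $1$.

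\textbf{Monotonicity.} Suppose $\boldsymbol\delta\le\boldsymbol\delta'$ componentwise, so $\delta_{ij(a)}\le\delta'_{ij(a)}$ for every $a$. Since $u\mapsto u_+$ is nondecreasing, $(g-\delta'_{ij})_+\le(g-\delta_{ij})_+$. Likewise the event $\{g>\delta'_{ij}\}$ is contained in $\{g>\delta_{ij}\}$, so $\mathbf 1[g>\delta'_{ij}]\le\mathbf 1[g>\delta_{ij}]$. Taking expectations over $a\sim P_{\mathrm{eval}}$, using monotonicity of the integral, gives $B_m(\boldsymbol\delta')\le B_m(\boldsymbol\delta)$ and $V_m(\boldsymbol\delta')\le V_m(\boldsymbol\delta)$.

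\textbf{Lipschitz bound.} I would use that $u\mapsto u_+$ is $1$-Lipschitz on $\mathbb R$, which gives $|(g-\delta_{ij})_+-(g-\delta'_{ij})_+|\le|\delta_{ij}-\delta'_{ij}|$ pointwise. Then
\[
|B_m(\boldsymbol\delta)-B_m(\boldsymbol\delta')|\le\mathbb E_a\big[|\delta_{ij(a)}-\delta'_{ij(a)}|\big].
\]
The right-hand side is the expectation of a function of the discrete variable $ij(a)$ alone. Conditioning on $ij(a)$ via the law of total expectation therefore splits it into $\sum_{i<j}\Pr(ij(a)=(i,j))\,|\delta_{ij}-\delta'_{ij}|=\sum_{i<j}w_{ij}|\delta_{ij}-\delta'_{ij}|$, which is the claimed bound.

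\textbf{Obstacle.} There is no real obstacle here; the only care needed is bookkeeping. The threshold attached to $a$ must be read off through $ij(a)$, and the weights $w_{ij}$ must come out of that conditioning step rather than being assumed uniform. It is also worth noting that no analogous Lipschitz bound is claimed for $V_m$, and none should be sought, since the indicator can jump at atoms of the gap distribution.
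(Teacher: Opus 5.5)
Your proposal is correct and follows the paper's proof essentially verbatim: pointwise monotonicity of $(g-t)_+$ and $\mathbf{1}[g>t]$ in $t$, integrated against $P_{\mathrm{eval}}$, then the $1$-Lipschitz property of the positive part together with reading the per-tuple threshold through $ij(a)$ to produce the weights $w_{ij}$. Your side remarks on integrability and on why no Lipschitz bound is sought for $V_m$ are accurate but are not part of the paper's argument.
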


\begin{proof}
For a fixed tuple $a$, $(\Delta_m(a)-\delta)_+$ and
$\mathbf{1}\{\Delta_m(a)>\delta\}$ are non-increasing in $\delta$, so taking expectation gives the monotonicity statements. For the Lipschitz bound, the map $\delta\mapsto (x-\delta)_+$ is 1-Lipschitz for every $x\in[0,1]$. Therefore
\begin{equation}
\begin{aligned}
|B_m(\boldsymbol{\delta})-B_m(\boldsymbol{\delta}')|
&\le
\mathbb E_a\!\left[|\delta_{ij(a)}-\delta'_{ij(a)}|\right]  \\
&=
\sum_{i<j}w_{ij}|\delta_{ij}-\delta'_{ij}|.
\end{aligned}
\end{equation}
\end{proof}

\begin{reliftheorem}[Ranking robustness under threshold drift]
\label{thm:ranking-robustness}
Consider two methods $A$ and $B$. Let $\boldsymbol{\delta}_{\mathrm{ref}}$ be the fixed reference threshold vector and let $\boldsymbol{\delta}_A,\boldsymbol{\delta}_B$ be method-induced threshold vectors. Define
\begin{equation}
\varepsilon_m
=
\sum_{i<j}w_{ij}|\delta_{m,ij}-\delta_{\mathrm{ref},ij}|.
\end{equation}
If
\begin{equation}
|B_A(\boldsymbol{\delta}_{\mathrm{ref}})
-B_B(\boldsymbol{\delta}_{\mathrm{ref}})|
>
\varepsilon_A+\varepsilon_B,
\end{equation}
then replacing the fixed threshold by the method-induced thresholds cannot reverse the Bias ordering of $A$ and $B$. Conversely, if the ordering changes, then the fixed-audit gap is at most $\varepsilon_A+\varepsilon_B$.
\end{reliftheorem}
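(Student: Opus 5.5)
The plan is to derive the statement directly from Theorem~\ref{thm:threshold-lipschitz}, applied once to each method with its own pair of threshold vectors. For method $A$, compare $\boldsymbol{\delta}_{\mathrm{ref}}$ with $\boldsymbol{\delta}_A$; the Lipschitz bound gives
\begin{equation}
|B_A(\boldsymbol{\delta}_A)-B_A(\boldsymbol{\delta}_{\mathrm{ref}})|\le \sum_{i<j}w_{ij}|\delta_{A,ij}-\delta_{\mathrm{ref},ij}|=\varepsilon_A .
\end{equation}
The same argument gives $|B_B(\boldsymbol{\delta}_B)-B_B(\boldsymbol{\delta}_{\mathrm{ref}})|\le\varepsilon_B$. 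The weights $w_{ij}$ come from the common audit distribution $P_{\mathrm{eval}}$, so the same weights appear for both methods. This is the one point I will check carefully: the theorem must be applied with a single $P_{\mathrm{eval}}$ shared by $A$ and $B$, which matches the fixed pair-sampling protocol.

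Next, I will work with the signed differences $D_{\mathrm{ref}}=B_A(\boldsymbol{\delta}_{\mathrm{ref}})-B_B(\boldsymbol{\delta}_{\mathrm{ref}})$ and $D_{\mathrm{ind}}=B_A(\boldsymbol{\delta}_A)-B_B(\boldsymbol{\delta}_B)$. Writing $D_{\mathrm{ind}}-D_{\mathrm{ref}}$ as the difference of the two per-method perturbations and applying the triangle inequality gives
\begin{equation}
|D_{\mathrm{ind}}-D_{\mathrm{ref}}|\le\varepsilon_A+\varepsilon_B .
\end{equation}
Now suppose $|D_{\mathrm{ref}}|>\varepsilon_A+\varepsilon_B$. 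If $D_{\mathrm{ref}}>0$, then $D_{\mathrm{ind}}\ge D_{\mathrm{ref}}-(\varepsilon_A+\varepsilon_B)>0$. The case $D_{\mathrm{ref}}<0$ is symmetric. Hence $D_{\mathrm{ind}}$ has the same strict sign as $D_{\mathrm{ref}}$, and the ordering is not reversed.

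The converse is the contrapositive. I will make precise what ``the ordering changes'' means: $D_{\mathrm{ref}}$ and $D_{\mathrm{ind}}$ do not share the same strict sign, which includes a tie becoming strict or a strict order becoming a tie. In that situation the forward implication cannot hold, so $|D_{\mathrm{ref}}|\le\varepsilon_A+\varepsilon_B$. The only delicate part of the proof is this bookkeeping around ties, and the strict inequality in the hypothesis is exactly what handles them. No step beyond Theorem~\ref{thm:threshold-lipschitz} and the triangle inequality should be needed.
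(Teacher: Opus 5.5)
Your proposal is correct and follows essentially the same route as the paper: apply Theorem~\ref{thm:threshold-lipschitz} to each method to get the per-method drift bounds $\varepsilon_A$ and $\varepsilon_B$, and combine them by the triangle inequality so that the raw and fixed Bias gaps differ by at most $\varepsilon_A+\varepsilon_B$. The sign therefore cannot flip when the fixed gap is larger, and the converse follows by contraposition; your explicit bookkeeping for ties and for the shared $P_{\mathrm{eval}}$ weights is more careful than the paper's sketch but does not change the argument.
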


\begin{proof}
By Theorem~\ref{thm:threshold-lipschitz},
\begin{equation}
|B_A(\boldsymbol{\delta}_A)-B_A(\boldsymbol{\delta}_{\mathrm{ref}})|\le \varepsilon_A,\quad
|B_B(\boldsymbol{\delta}_B)-B_B(\boldsymbol{\delta}_{\mathrm{ref}})|\le \varepsilon_B.
\end{equation}
Thus the raw and fixed Bias gaps differ by at most $\varepsilon_A+\varepsilon_B$. If $|B_A(\boldsymbol{\delta}_{\mathrm{ref}})-B_B(\boldsymbol{\delta}_{\mathrm{ref}})|$ exceeds $\varepsilon_A+\varepsilon_B$, this perturbation cannot change its sign. The converse follows by contraposition.
\end{proof}

\begin{reliftheorem}[Reference-threshold calibration]
\label{thm:conformal-delta}
Fix a reference model or representation and let $D_1,\ldots,D_n$ be exchangeable reference-pair distances in $[0,1]$. For $\alpha\in(0,1)$, set $k=\lceil(n+1)(1-\alpha)\rceil$ and choose $\widehat{\delta}_{\mathrm{ref}}=D_{(k)}$ if $k\le n$, with the conservative convention $\widehat{\delta}_{\mathrm{ref}}=1$ otherwise. For a new exchangeable reference pair with distance $D_{n+1}$,
\begin{equation}
\Pr(D_{n+1}>\widehat{\delta}_{\mathrm{ref}})
\le
\frac{n+1-k}{n+1}
\le \alpha .
\end{equation}
\end{reliftheorem}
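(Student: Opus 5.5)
The plan is the standard split-conformal rank argument. Throughout, $D_{(1)}\le\cdots\le D_{(n)}$ denotes the order statistics of the calibration distances.

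\textbf{Trivial case.} If $k>n$, then $\widehat{\delta}_{\mathrm{ref}}=1$. Since $D_{n+1}\in[0,1]$, the event $\{D_{n+1}>1\}$ is empty, so its probability is $0$. The conclusion $0\le (n+1-k)/(n+1)$ would require $k\le n+1$, which does hold: $k=\lceil (n+1)(1-\alpha)\rceil\le n+1$ because $\alpha>0$. The second inequality, $(n+1-k)/(n+1)\le\alpha$, is equivalent to $k\ge (n+1)(1-\alpha)$, which is immediate from the ceiling. This arithmetic step also serves the main case.

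\textbf{Reduction to a rank event.} Now assume $k\le n$. I would reduce $\{D_{n+1}>D_{(k)}\}$ to a statement about the rank of $D_{n+1}$ among all $n+1$ values. If $D_{n+1}$ exceeds the $k$-th smallest calibration value, then at least $k$ of the calibration values are strictly below $D_{n+1}$. Define
\[
R=\#\{\,l\le n+1 : D_l<D_{n+1}\,\}.
\]
Then the inclusion to verify is $\{D_{n+1}>D_{(k)}\}\subseteq\{R\ge k\}$.

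\textbf{Bounding the rank event via exchangeability.} Let $R_l$ denote the strict-rank count for each index $l$. By exchangeability, $\Pr(R_{n+1}\ge k)=\Pr(R_l\ge k)$ for every $l$. Hence
\[
\Pr(R_{n+1}\ge k)=\frac{1}{n+1}\,\mathbb{E}\,\#\{\,l : R_l\ge k\,\}.
\]
Deterministically, at most $n+1-k$ indices can have at least $k$ values strictly below them. To see this, sort the $n+1$ values. An index whose value has $R_l\ge k$ must lie at sorted position at least $k+1$, because the values strictly below it occupy the earlier positions. Ties only lower $R_l$, so they cannot increase the count. This gives the bound $(n+1-k)/(n+1)$.

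\textbf{Expected obstacle.} The main difficulty is handling ties cleanly without assuming continuous distributions. Using strict counts ($<$ in $R_l$) together with the strict event $D_{n+1}>\widehat{\delta}_{\mathrm{ref}}$ is what makes the counting bound hold with ties, so this pairing needs to be checked carefully. Everything else is routine.
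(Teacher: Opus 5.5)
Your proposal is correct and uses the same split-conformal rank argument the paper cites: the inclusion $\{D_{n+1}>D_{(k)}\}\subseteq\{R\ge k\}$ is the paper's ``rank larger than $k$'' step. Averaging over indices with strict counts rigorously justifies the paper's one-line remark that ties only make the strict inequality conservative, and you also verify two points the paper leaves implicit: the $k>n$ convention and the arithmetic $\frac{n+1-k}{n+1}\le\alpha$.
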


\begin{proof}
This is the standard split-conformal rank argument~\citep{vovk2005algorithmic,angelopoulos2022conformal}. Without ties, the rank of $D_{n+1}$ among the $n+1$ exchangeable scores is uniform. The event $D_{n+1}>D_{(k)}$ occurs only when the rank is larger than $k$, which has probability $(n+1-k)/(n+1)$. With ties, the strict inequality is conservative.
\end{proof}

Theorem~\ref{thm:conformal-delta} does not state that every method's fairness violation rate is bounded by $\alpha$. It states that $\delta_{\mathrm{ref}}$ can be selected as a calibrated semantic tolerance for a reference pair distribution and then held fixed for all methods, which is the role needed by the fixed-$\delta$ audit.

\begin{reliftheorem}[Huberized hinge approximation]
\label{thm:huber-hinge-approx}
Let $H(z)=(z)_+$ and let $\phi_\mu(z)$ be the huberized hinge in Eq.~(\ref{eq:huber-hinge}). For all $z\in\mathbb R$,
\begin{equation}
0\le H(z)-\phi_\mu(z)\le \frac{\mu}{2}.
\end{equation}
Consequently, for any random margin $z$ and any $\gamma>0$,
\begin{equation}
\Pr(z>\gamma)
\le
\frac{\mathbb E[\phi_\mu(z)]+\mu/2}{\gamma}.
\end{equation}
\end{reliftheorem}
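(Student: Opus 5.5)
The proof splits into a pointwise comparison of $H$ and $\phi_\mu$ on the three regions of Eq.~(\ref{eq:huber-hinge}), followed by a Markov-type bound for the probability statement.

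\textbf{Pointwise bound.} Write $g(z)=H(z)-\phi_\mu(z)$ and check each region.

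For $z\le 0$, both $H$ and $\phi_\mu$ vanish, so $g(z)=0$.

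For $z>\mu$, we have $g(z)=z-(z-\mu/2)=\mu/2$.

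For $0<z\le\mu$, we have $g(z)=z-z^2/(2\mu)$. This is a concave quadratic whose derivative $1-z/\mu$ is nonnegative on $(0,\mu]$. Hence $g$ is nondecreasing there, starting from $0$ at $z=0$ and reaching $\mu-\mu/2=\mu/2$ at $z=\mu$. So $0\le g(z)\le\mu/2$ on this interval.

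Combining the three cases gives $0\le H(z)-\phi_\mu(z)\le\mu/2$ for all $z\in\mathbb R$.

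\textbf{Probability bound.} Fix $\gamma>0$. The indicator $\mathbf 1\{z>\gamma\}$ is dominated by $(z)_+/\gamma$:
\begin{itemize}
\item on $\{z>\gamma\}$ we have $(z)_+/\gamma = z/\gamma > 1$;
\item elsewhere the indicator is $0$ and the right side is nonnegative.
\end{itemize}
Applying the pointwise bound, $(z)_+\le\phi_\mu(z)+\mu/2$. Taking expectations (monotonicity of expectation, i.e.\ Markov's inequality applied to $(z)_+$) yields
\[
\Pr(z>\gamma)\le \frac{\mathbb E[(z)_+]}{\gamma}\le \frac{\mathbb E[\phi_\mu(z)]+\mu/2}{\gamma}.
\]

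\textbf{Integrability.} The only point that needs care is that $\mathbb E[\phi_\mu(z)]$ exists. Since $\phi_\mu\ge 0$, the expectation is always defined in $[0,\infty]$, and the inequality holds trivially if it is infinite. No further assumptions on $z$ are needed.

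The argument has no real obstacle. The one step requiring attention is the middle interval, where monotonicity of $g$ gives the upper bound $\mu/2$ directly rather than through a separate maximization.
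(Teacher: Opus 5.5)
Your proof is correct and follows the paper's argument: the same three-case check of $H-\phi_\mu$ on $z\le 0$, $0<z\le\mu$, and $z>\mu$, then taking expectations and applying Markov's inequality to $H(z)=(z)_+$. The monotonicity argument on the middle interval and the remark that $\mathbb E[\phi_\mu(z)]\in[0,\infty]$ is always defined are details the paper leaves implicit.
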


\begin{proof}
If $z\le0$, both $H(z)$ and $\phi_\mu(z)$ are zero. If $0<z\le\mu$, then
$H(z)-\phi_\mu(z)=z-z^2/(2\mu)$, which lies in $[0,\mu/2]$. If $z>\mu$, then
$H(z)-\phi_\mu(z)=\mu/2$. Taking expectations gives
$\mathbb E[H(z)]\le\mathbb E[\phi_\mu(z)]+\mu/2$, and Markov's inequality applied to $H(z)$ gives the tail bound.
\end{proof}

For the raw unsmoothed ratios, dividing the confidence-gap magnitude and the clipped distance by their own 95th-percentile scales cancels purely multiplicative scale changes when clipping is inactive. The implemented margin applies the fixed-$\beta$ Huber transform before division, so it is scale-normalized but not exactly scale-invariant, especially in the quadratic Huber regime.

The following log-space recursion is the first-order idealization of the implemented multiplicative update when the multiplicative factor remains positive and close to one. The statement characterizes the intended local regime rather than establishing global convergence of the deep network.

\begin{reliftheorem}[Local controller stability]
\label{thm:controller-stability}
Consider the log-penalty idealization of the multiplicative controller, with $u_t=\log\lambda_t$ and projected update
\begin{equation}
u_{t+1}
=
\Pi_{[u_{\min},u_{\max}]}\!\left(u_t+\eta(h(u_t)+\xi_t)\right),
\end{equation}
where $h(u)=\mathbb E[r_t-r^\star\mid u_t=u]$, $\mathbb E[\xi_t\mid\mathcal F_t]=0$, and
$\mathbb E[\xi_t^2\mid\mathcal F_t]\le\sigma^2$. Suppose there is $u^\star\in[u_{\min},u_{\max}]$ with $h(u^\star)=0$, and on the interval of interest
\begin{equation}
(u-u^\star)h(u)\le-\kappa(u-u^\star)^2,\qquad
|h(u)|\le L|u-u^\star|.
\end{equation}
If $0<\eta<2\kappa/L^2$, then with
$\varrho=1-2\eta\kappa+\eta^2L^2<1$,
\begin{equation}
\mathbb E[(u_t-u^\star)^2]
\le
\varrho^t(u_0-u^\star)^2
+
\frac{\eta^2\sigma^2}{1-\varrho},
\end{equation}
and
\begin{equation}
\limsup_{t\to\infty}\mathbb E[(u_t-u^\star)^2]
\le
\frac{\eta\sigma^2}{2\kappa-\eta L^2}.
\end{equation}
\end{reliftheorem}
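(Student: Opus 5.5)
The plan is the standard one-step mean-square contraction argument for projected stochastic approximation, followed by unrolling the recursion. Write $e_t=u_t-u^\star$ and let $\mathcal F_t$ be the history up to step $t$, so that $u_t$ and $h(u_t)$ are $\mathcal F_t$-measurable.

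\textbf{Step 1: remove the projection.} Since $u^\star\in[u_{\min},u_{\max}]$ and projection onto a closed interval is non-expansive, $|\Pi_{[u_{\min},u_{\max}]}(v)-u^\star|\le|v-u^\star|$ for every $v$. Therefore
\begin{equation}
e_{t+1}^2\le\bigl(e_t+\eta h(u_t)+\eta\xi_t\bigr)^2 .
\end{equation}

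\textbf{Step 2: expand and take conditional expectations.} Expanding the square gives
\begin{equation}
e_t^2+2\eta e_t h(u_t)+\eta^2h(u_t)^2+2\eta\bigl(e_t+\eta h(u_t)\bigr)\xi_t+\eta^2\xi_t^2 .
\end{equation}
Conditioning on $\mathcal F_t$, the cross term has zero mean because $\mathbb E[\xi_t\mid\mathcal F_t]=0$ and its prefactor is $\mathcal F_t$-measurable. The last term is at most $\eta^2\sigma^2$ in conditional mean. For the deterministic terms, the monotone-response assumption gives $2\eta e_t h(u_t)\le-2\eta\kappa e_t^2$, and the growth bound gives $\eta^2 h(u_t)^2\le\eta^2L^2e_t^2$. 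Combining these,
\begin{equation}
\mathbb E[e_{t+1}^2\mid\mathcal F_t]\le\varrho\, e_t^2+\eta^2\sigma^2 .
\end{equation}
Taking total expectation yields $a_{t+1}\le\varrho a_t+\eta^2\sigma^2$ with $a_t=\mathbb E[e_t^2]$.

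\textbf{Step 3: check the range of $\varrho$.} Iterating the inequality requires $0\le\varrho<1$. The upper bound $\varrho<1$ is exactly $\eta<2\kappa/L^2$. For nonnegativity, the two assumptions together imply
\begin{equation}
\kappa e^2\le-e\,h(u)\le|e|\,L|e| ,
\end{equation}
so $\kappa\le L$. The quadratic $1-2\eta\kappa+\eta^2L^2$ is minimized at $\eta=\kappa/L^2$, where it equals $1-\kappa^2/L^2\ge0$. Hence $\varrho\ge0$.

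\textbf{Step 4: unroll and pass to the limit.} Induction gives
\begin{equation}
a_t\le\varrho^t a_0+\eta^2\sigma^2\sum_{s<t}\varrho^s\le\varrho^t a_0+\frac{\eta^2\sigma^2}{1-\varrho} ,
\end{equation}
with $a_0=(u_0-u^\star)^2$. Letting $t\to\infty$, and using $1-\varrho=\eta(2\kappa-\eta L^2)$, gives the stated bound $\eta\sigma^2/(2\kappa-\eta L^2)$ on the $\limsup$.

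\textbf{Main obstacle.} The one delicate point is that the two conditions on $h$ are assumed only ``on the interval of interest,'' while Step 2 applies them at the current iterate $u_t$. I will make this explicit by taking the interval of interest to contain $[u_{\min},u_{\max}]$. The projection then keeps every iterate inside the region where the assumptions hold. Without that identification, the result is genuinely only local, and one would need an additional argument that the iterates stay in the region.
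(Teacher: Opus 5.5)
Your proposal is correct and follows the same route as the paper's proof. Both arguments use non-expansiveness of the projection about $u^\star$, expand the square so that the martingale-difference cross term vanishes under $\mathbb E[\cdot\mid\mathcal F_t]$, apply the monotone-response and growth bounds to get $\mathbb E[e_{t+1}^2\mid\mathcal F_t]\le\varrho\, e_t^2+\eta^2\sigma^2$, and then unroll. You also check two things the paper leaves implicit: that $\kappa\le L$ forces $\varrho\ge 0$, which is needed to iterate the inequality, and that the interval of interest must contain $[u_{\min},u_{\max}]$ (and the initial point $u_0$, since it is not produced by a projection) so the assumptions apply at every iterate.
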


\begin{proof}
Let $\Delta_t=u_t-u^\star$. Projection onto an interval is non-expansive, so conditioning on $\mathcal F_t$ gives
\begin{equation}
\begin{aligned}
\mathbb E[\Delta_{t+1}^2\mid\mathcal F_t]
&\le
\mathbb E[(\Delta_t+\eta h(u_t)+\eta\xi_t)^2\mid\mathcal F_t] \\
&=
\Delta_t^2+2\eta\Delta_t h(u_t)+\eta^2 h(u_t)^2+\eta^2\mathbb E[\xi_t^2\mid\mathcal F_t] \\
&\le
(1-2\eta\kappa+\eta^2L^2)\Delta_t^2+\eta^2\sigma^2.
\end{aligned}
\end{equation}
Iterating the recursion yields the finite-time and steady-state bounds.
\end{proof}

Theorem~\ref{thm:controller-stability} is local: it assumes a feasible target, monotone expected response, bounded noise, and clipping to the operational interval. It formalizes the intended controller regime and explains why overly large $\eta$ can move training to a less favorable operating point, matching the local sensitivity checks below.

\subsection{MIMIC-III fixed-threshold sensitivity}
\begin{table}[H]
\centering
\scriptsize
\setlength{\tabcolsep}{2.2pt}
\renewcommand{\arraystretch}{1.02}
\caption{MIMIC-III fixed-$\delta$ sensitivity. Entries are Bias$_{\mathrm{fixed}\text{-}\delta}$ mean$\pm$std over 7 runs on the test split.}
\label{tab:delta-sensitivity}
\begin{tabular}{lccccc}
\toprule
Method & $0.15$ & $0.20$ & $0.275$ & $0.35$ & $0.45$ \\
\midrule
ERM & $.321\pm.019$ & $.283\pm.018$ & $.227\pm.017$ & $.176\pm.016$ & $.119\pm.013$ \\
Uncert. & $\underline{.302\pm.035}$ & $\underline{.263\pm.034}$ & $\underline{.209\pm.032}$ & $\underline{.161\pm.029}$ & $\underline{.108\pm.022}$ \\
GradNorm & $\mathbf{.283\pm.012}$ & $\mathbf{.246\pm.012}$ & $\mathbf{.192\pm.011}$ & $\mathbf{.144\pm.010}$ & $\mathbf{.093\pm.008}$ \\
PCGrad & $.339\pm.015$ & $.300\pm.014$ & $.244\pm.014$ & $.192\pm.013$ & $.132\pm.011$ \\
FairGrad & $.320\pm.014$ & $.282\pm.013$ & $.229\pm.013$ & $.180\pm.011$ & $.126\pm.010$ \\
CATS & $.333\pm.056$ & $.293\pm.055$ & $.237\pm.052$ & $.185\pm.047$ & $.128\pm.039$ \\
ReLiF & $.313\pm.016$ & $.275\pm.016$ & $.220\pm.015$ & $.169\pm.014$ & $.113\pm.011$ \\
\bottomrule
\end{tabular}
\end{table}

\subsection{Controller sensitivity}
\label{app:controller-sensitivity}
The controller checks are local sensitivity analyses on MIMIC-III rather than new main results. With $r^\star=0.16$, the tested step sizes $\eta=10^{-4},10^{-3},10^{-2}$ give fixed-$\delta$ Bias values of $0.2158$, $0.1889$, and $0.2626$, respectively. All runs complete training, but the aggressive $\eta=10^{-2}$ setting moves to a less favorable fixed-threshold operating point. A single-seed sweep of $r^\star$ from $0.16$ to $0.28$ in increments of $0.02$ keeps Worst, Macro, and Bias in the same range, with larger $r^\star$ values more often touching the lower $\lambda$ bound. This supports the weaker but more appropriate conclusion that the controller is tunable rather than brittle.

\subsection{NYUv2 violation rates}
Table~\ref{tab:nyuv2-vr} reports violation rates under the same fixed-threshold audit as Table~\ref{tab:nyuv2-main}. The pattern mirrors the aligned-bias results: non-ReLiF baselines remain around $0.68$--$0.71$ at $\delta_{p75}$, whereas ReLiF drops to $0.0862$, with the same separation under $\delta_{p50}$.
\begin{table}[!htbp]
\centering
\scriptsize
\setlength{\tabcolsep}{4pt}
\renewcommand{\arraystretch}{1.03}
\caption{NYUv2 fixed-threshold violation rates under the same audit as Table~\ref{tab:nyuv2-main}. Values are mean$\pm$sample standard deviation over 7 seeds.}
\label{tab:nyuv2-vr}
\begin{tabular}{lcc}
\toprule
Method & VR@$\delta_{p75}$ $\downarrow$ & VR@$\delta_{p50}$ $\downarrow$ \\
\midrule
ERM & $.6896\pm.0257$ & $.7084\pm.0282$ \\
Uncertainty & $.6925\pm.0305$ & $.7108\pm.0286$ \\
GradNorm & $.6954\pm.0372$ & $.7153\pm.0312$ \\
PCGrad & $.7087\pm.0424$ & $.7285\pm.0372$ \\
FairGrad & $\underline{.6835\pm.0532}$ & $\underline{.7037\pm.0459}$ \\
CATS & $.7095\pm.0437$ & $.7286\pm.0416$ \\
ReLiF & $\mathbf{.0862\pm.0964}$ & $\mathbf{.1127\pm.1201}$ \\
\bottomrule
\end{tabular}
\end{table}

\FloatBarrier
\end{document}